\documentclass{article}

\usepackage[preprint]{neurips_2026}

\usepackage[utf8]{inputenc}
\usepackage[T1]{fontenc}
\usepackage{hyperref}
\usepackage{url}
\usepackage{booktabs}
\usepackage{graphicx}
\usepackage{amsmath}
\usepackage{amssymb}
\usepackage{amsfonts}
\usepackage{amsthm}
\usepackage{nicefrac}
\usepackage{microtype}
\usepackage{xcolor}
\usepackage{tikz}
\usepackage{float}
\usetikzlibrary{arrows.meta,positioning,fit,calc}

\newtheorem{lemma}{Lemma}
\newtheorem{definition}{Definition}

\title{State commitment learning: training language models to distinguish computation from memory}

\author{%
  Fei Ding\textsuperscript{1}\thanks{Corresponding author: \texttt{dignfei@gmail.com}} \quad
  Yongkang Zhang\textsuperscript{1} \quad
  Runhao Liu\textsuperscript{1} \\
  Yuhao Liao\textsuperscript{2} \quad
  Zijian Zeng\textsuperscript{2} \quad
  Huiming Yang\textsuperscript{2} \\
  \textsuperscript{1}Alibaba Group \quad
  \textsuperscript{2}Tsinghua University
}

\begin{document}

\maketitle

\begin{abstract}
Reasoning language models do not distinguish tokens used for \emph{computation} from tokens that constitute \emph{persistent state}: once generated, all hidden thoughts remain in context and influence future predictions. As a result, downstream reasoning may depend on failed attempts, dead ends, and private scratch work that should not be safely relied on later. We recast this phenomenon as a new training objective, \textbf{state commitment learning}: training models to explicitly distinguish information that should be committed as persistent state from temporary computation that can be discarded. We define a counterfactual criterion, \emph{persistent-state sufficiency}, which makes it trainable and measurable whether an answer remains usable after hidden thoughts are erased. We then propose \textbf{Counterfactual Erasure RL} (CERL), which evaluates, under the same prefix, both a path that keeps hidden thoughts and a path that erases them, and gives reward only when the erasure path remains correct. We also introduce the \emph{Erasure Dependence Protocol} and show across mathematics, long-chain logic, scientific QA, and multi-turn tool-use evaluation that CERL substantially reduces answer dependence on hidden thoughts without sacrificing accuracy, consistently outperforming correctness-only RL and long-answer SFT baselines.
\end{abstract}

\begin{figure}[H]
\centering
\resizebox{\textwidth}{!}{%
\begin{tikzpicture}[
  font=\small,
  box/.style={draw, rounded corners=2pt, line width=0.55pt, align=center,
    inner sep=4pt, minimum height=0.72cm},
  ctx/.style={box, fill=blue!6, draw=blue!55!black},
  thought/.style={box, fill=red!7, draw=red!55!black},
  state/.style={box, fill=green!9, draw=green!45!black},
  arr/.style={-{Latex[length=2mm]}, line width=0.65pt, black!70},
  bad/.style={-{Latex[length=2mm]}, line width=0.7pt, red!65!black},
  good/.style={-{Latex[length=2mm]}, line width=0.7pt, green!45!black},
  note/.style={font=\footnotesize, align=center},
  title/.style={font=\small\bfseries}
]
\node[title] at (-3.8,1.35) {Standard reasoning};
\node[ctx] (sx) at (-6.2,0.45) {Problem\\$x$};
\node[thought, right=0.45cm of sx] (sh) {Hidden thought\\$H$};
\node[state, right=0.45cm of sh] (sa) {Answer\\$A$};
\node[ctx, below=0.78cm of sh, minimum width=3.65cm] (sctx) {Future context\\$x + H + A$};
\node[box, fill=red!4, draw=red!55!black, right=0.5cm of sctx] (sdown) {Downstream\\prediction};
\draw[arr] (sx) -- (sh);
\draw[arr] (sh) -- (sa);
\draw[bad] (sh) -- node[note, left=1pt] {failed drafts\\remain} (sctx);
\draw[arr] (sctx) -- (sdown);

\draw[black!25, dashed] (-0.2,-1.45) -- (-0.2,1.55);

\node[title] at (3.75,1.35) {State commitment learning};
\node[ctx] (rx) at (0.9,0.45) {Problem\\$x$};
\node[thought, right=0.45cm of rx] (rh) {Temporary\\computation $H$};
\node[state, right=0.45cm of rh] (ra) {Committed\\state $A$};
\node[ctx, below=0.78cm of ra, minimum width=2.6cm] (rctx) {Future context\\$x + A$};
\node[box, fill=green!5, draw=green!45!black, right=0.5cm of rctx] (rdown) {Downstream\\prediction};
\node[box, fill=red!3, draw=red!50!black, below=0.78cm of rh, minimum width=1.65cm] (rerase) {erase $H$};
\draw[arr] (rx) -- (rh);
\draw[good] (rh) -- (ra);
\draw[bad] (rh) -- (rerase);
\draw[good] (ra) -- (rctx);
\draw[arr] (rctx) -- (rdown);
\end{tikzpicture}}
\caption{Overview of state commitment learning. Standard reasoning leaves hidden thoughts in context, whereas CERL trains the model to use hidden thoughts as temporary computation, commit only the future-relevant answer state, and remain correct after erasure.}
\label{fig:overview}
\end{figure}

\section{Introduction}

\subsection{Autoregressive language models do not distinguish computation from memory}

Autoregressive language models do not distinguish tokens used for \emph{computation} from tokens used for \emph{memory}: once a reasoning token is generated, it is appended to the context and becomes input to future predictions. This default makes long reasoning traces both a source of computational burden and a source of spurious dependence. Downstream predictions may depend on failed attempts, dead ends, or private scratch work that should not enter the model's persistent state.

Recent evidence suggests that longer chains of thought are not always beneficial: some reasoning tokens provide no positive contribution to final or downstream correctness and may even interfere with future predictions (see \S\ref{sec:rw-long-cot}). This suggests that retaining every generated reasoning token as context is not a reasonable memory mechanism.

\subsection{The state commitment problem and the internalization gap}

We recast the above phenomenon as a training-objective problem that has been less explicitly modeled. Reasoning models lack a \emph{state commitment} mechanism: after using hidden computation, a model should explicitly commit the information needed for future reasoning into a persistent answer state while discarding the remaining temporary computation.

Existing mitigations mostly operate at inference time on already generated thought traces, such as eviction, pruning, compression, or selecting shorter traces. They do not directly train the model itself to decide what is persistent and what is temporary. Their common blind spot is that they never optimize the property that ``answer $A$ remains a usable state interface after hidden thought $H$ is erased.'' We call this gap the \emph{internalization gap}.

\subsection{Method overview}

The core distinction from prior work can be summarized as follows: \emph{compression asks how to preserve the past trace; state commitment asks what the future needs}. Compression treats the reasoning trace as a past object to be compactly preserved. State commitment treats the visible answer as a future-facing state interface that must be explicitly committed.

We instantiate this idea with \textbf{Counterfactual Erasure RL} (CERL). Under the same problem prefix, CERL evaluates matched continuations in parallel: (i) a full-thought path that keeps hidden thoughts, (ii) an erasure path that progressively deletes generated hidden thoughts, and (iii) a skip baseline used to audit recomputation. The main reward is tied to erasure-path correctness, with auxiliary length and anti-postponement controls. This signal directly captures whether answer $A$ can support downstream reasoning when hidden thought $H$ is no longer visible.

Because prior work has not made this property a training objective, naively dropping thoughts at inference time often collapses accuracy. Training under counterfactual erasure instead lets the answer state itself serve as the persistent interface while preserving accuracy.

\subsection{Contributions}

Our contributions follow five levels: a new problem, a new objective, a new method, a new protocol, and new empirical findings.

\noindent\textbf{(1) New problem definition.} We propose \emph{State Commitment Learning}, which captures the missing training objective that reasoning models should distinguish temporary computation from persistent state that can be relied on in the future. We give a concrete setting, \emph{Post-Answer Hidden-Thought Erasure}: the model alternates between hidden thoughts and visible answer states; each thought is progressively erased when the next thought begins; the final context retains only visible answers. We make no a priori assumption about the content relation between $A$ and $H$; it emerges from the training objective.

\noindent\textbf{(2) New counterfactual objective: persistent-state sufficiency.} We define the counterfactual criterion of \emph{persistent-state sufficiency}: under matched full-thought / erasure paths, erasing hidden thought should not reduce downstream correctness. This turns whether an answer has been committed into a future-reliable state into an optimizable and estimable condition; the paired counterfactual paths are formalized in Definition~\ref{def:pss}.

\noindent\textbf{(3) New training algorithm: CERL+HSCO.} We propose Counterfactual Erasure RL as an optimization algorithm for persistent-state sufficiency, implemented as \emph{Hierarchical State-Commitment Optimization} (HSCO): two GRPO layers train the hidden-thought policy $H_1$ and the answer state $A_1$ over non-overlapping token ranges.

\noindent\textbf{(4) New evaluation protocol: erasure dependence.} We propose the \emph{Erasure Dependence Protocol}. It reports four sufficiency metrics after erasure: \emph{Answer Sufficiency Gap} (ASG, the accuracy gap between full and erasure paths, lower is better), \emph{Hidden Thought Dependency Rate} (HTDR, the fraction of examples where the full path is correct but the erasure path is wrong, lower is better), \emph{Erasure Success Rate} (ESR, conditional correctness after erasure given full-path correctness, higher is better), and \emph{Answer Interface Sufficiency} (AIS, the erasure-path accuracy divided by full-path accuracy, higher is better). We also report \emph{Marginal State Gain} (MSG),
\[
\mathrm{MSG} \;=\; \mathrm{Acc}(x + A_1) \;-\; \mathrm{Acc}(x + \texttt{empty}) \;=\; \mathrm{Acc}_{\mathrm{pe}} - \mathrm{Acc}_{\mathrm{skip}} .
\]
MSG rules out the narrow explanation that the erasure path succeeds by ignoring $A_1$ and recomputing from $x$; it should still be interpreted together with downstream length control and visible-CoT leakage audits.

\noindent\textbf{(5) Empirical findings.} Our experiments are designed to show that correctness-only reinforcement learning is insufficient for learning persistent-state sufficiency, while counterfactual erasure training substantially closes ASG without sacrificing accuracy and improves state management in BFCL-v3 multi-turn tool use. Long-answer SFT is used as a diagnostic control for the weaker alternative explanation that supervising longer visible answers alone naturally yields the same erasure-dependence signature; it is not used as a standalone causal attribution.

\section{Related work}

\subsection{Long-CoT degradation as motivation}
\label{sec:rw-long-cot}

A growing body of work reports that longer chains of thought can be worse~\citep{hassid2026dontoverthinkitpreferring,wu2026when,luo-etal-2025-valley,zheng2025the}. These observations motivate our central question: if many reasoning tokens do not help final or downstream reasoning, should every generated thought token be retained as future context?

\subsection{Inference-time pruning and compression do not internalize the boundary}
\label{sec:rw-inference-time}
\label{sec:rw-vs-compression}
\label{sec:rw-cognitive}

Existing methods mostly manage already generated traces through pruning, eviction, compression, selection, or external memory. These approaches ask how to control or preserve past reasoning traces. Our question is different: whether the model can be trained so that the visible answer state $A$ remains a sufficient downstream interface after hidden thought $H$ is erased. Detailed positioning is in Appendix~\ref{app:direction}.

\subsection{Long-answer supervision is not state commitment}
\label{sec:rw-vs-cot-in-answer}

Long-answer SFT is a useful diagnostic control, but it supervises answer form rather than counterfactual sufficiency. It tests whether longer visible answer states alone can reproduce the erasure-dependence signature. In contrast, CERL optimizes whether $A$ remains reliable after $H$ is erased; $A$ may be short or long, and form is not the objective.

\section{Method}

\subsection{Problem definition: post-answer hidden-thought erasure}
\label{sec:problem}

Given a problem $x$, the model may generate hidden thought $H$ to help produce answer $A$. Before the next hidden-thought segment begins, the previous $H$ is erased, and only the committed $x+A$ remains for downstream context. The central question is whether the model can learn, during training, to make $A$ correct, sufficient, and future-reliable after $H$ is erased; equivalently, whether it can explicitly distinguish the boundary between computation and state.

\paragraph{Formal definition of progressive erasure.}
\begin{verbatim}
For t = 1..n:
    H_t = Think(x, A_{1:t-1})
    A_t = Output_State(x, A_{1:t-1}, H_t)
    Before H_{t+1} starts generating: Erase(H_t)
\end{verbatim}
The final context is $x + A_1 + \cdots + A_n = x + A_{1:n}$, and all think segments have disappeared.

$A_1,\ldots,A_n$ are all permanently retained answer tokens, and each $A_i$ is produced with the help of its corresponding hidden thought $H_i$. We do not model whether to enter thought as a separate learning target: if an intermediate step requires no hidden computation, it does not form a separate erasure segment and is treated as part of a neighboring answer state. Form-level differences are not prescribed; they are determined by the objective.

\paragraph{Sufficiency target.}
We instantiate persistent-state sufficiency with two requirements:
\begin{itemize}
    \item In single-turn settings, $x + A_{1:n}$ should be sufficient to infer the final correct answer without $H$.
    \item In multi-turn settings, the historical visible answers $A_{1:n}$ should support follow-up queries after all $H$ has been erased.
\end{itemize}

\subsection{Tag semantics}

Visible text is the default output channel. Only \verb|<think>...</think>| is special and marks temporary hidden-thought spans. The generation alternates between hidden-thought spans and answer states; a full example appears in Appendix~\ref{app:erasure-example}.

\subsection{Training objective: persistent-state sufficiency}
\label{sec:objective}

We directly define a future-facing objective in task-projected terms. Given the same generated tuple $(x,H,A)$, if $A$ has already served as the persistent-state interface, then erasing $H$ should not reduce downstream correctness:
\[
\Pr_\pi[C_{\mathrm{pe}}=1 \mid x,H,A]
\;\ge\;
\Pr_\pi[C_{\mathrm{full}}=1 \mid x,H,A].
\]
Here $C_{\mathrm{full}}$ and $C_{\mathrm{pe}}$ denote downstream correctness events under the full-thought and erasure paths, respectively. The meaning is that $A$, as committed persistent state, should serve as the interface supporting downstream reasoning after hidden thought $H$ is erased. In this sense, $A$ crosses the computation--state boundary and becomes an object that can be relied on in the future. CERL's binary correctness reward is a finite-sample surrogate for this objective.

We make no content-level assumption about $A$ and $H$: $A$ may inherit, rewrite, extend, or be independent of $H$. Its final form emerges from the training objective rather than from a form constraint. The commitment is an objective-level property, persistent-state sufficiency, not a form-level constraint.

\subsection{Theoretical framework}
\label{sec:theory}

This section defines the estimable objects used for training and evaluation. We do not define PSS as equality of full output distributions; instead, we define its task-projected version over evaluator-induced binary correctness events. To avoid notation overload, $P_\pi(y\mid\cdot)$ denotes only the output distribution of policy $\pi$; $\Pr_\pi[\cdot]$ denotes event probabilities induced jointly by the policy, data distribution, decoder randomness, and evaluator; and $\mathrm{Acc}$ abbreviates the probability of a correctness event. The Acc values in experimental tables are finite-sample estimates. Specifically,
\[
\mathrm{Acc}_{\text{full}}(\pi) = \Pr_\pi[C_{\text{full}}=1],
\qquad
\mathrm{Acc}_{\text{pe}}(\pi) = \Pr_\pi[C_{\text{pe}}=1],
\]
where $C_{\text{full}}$ and $C_{\text{pe}}$ denote final-answer correctness under the full-thought and progressive-erasure paths.
The full / erasure comparison is paired counterfactual evaluation: we first fix the same generated tuple $(x,H,A)$ and then construct two evaluation contexts
\[
g_{\mathrm{full}}(x,H,A)=(x,H,A),\qquad
g_{\mathrm{pe}}(x,H,A)=(x,A).
\]
Thus $A$ is identical in the two paths, and the only intervention is whether $H$ remains in the future context. We compare the effect of this intervention on correctness events; we do not require the two paths to generate identical downstream text or induce identical output distributions.

\begin{definition}[Persistent-state sufficiency]
\label{def:pss}
A policy $\pi$ satisfies \emph{persistent-state sufficiency} (PSS) iff, under the paired context intervention above, erasing $H$ does not reduce downstream correctness:
\[
\Pr_\pi[C^{\mathrm{pe}}=1 \mid x,H,A]
\;\ge\;
\Pr_\pi[C^{\mathrm{full}}=1 \mid x,H,A] .
\]
\end{definition}

\begin{definition}[Hidden-thought dependence]
\label{def:dh}
\[
D_H(\pi) \;=\; \Pr_\pi\bigl[C_{\text{full}}=1 \;\wedge\; C_{\text{pe}}=0\bigr].
\]
$D_H$ is a sample-level violation rate of PSS: $D_H=0$ means that no sample is correct on the full path and wrong on the erasure path. Larger $D_H$ means downstream reasoning relies more on the erased computation side ($H$) than on the memory side ($A$). Its empirical estimate is HTDR (\S\ref{sec:metrics}).
\end{definition}

\begin{lemma}[Relationship between ASG and $D_H$]
\label{lem:asg}
Let the erasure purification rate be $D_E(\pi)=\Pr_\pi[C_{\text{full}}=0 \wedge C_{\text{pe}}=1]$. Then
\[
\mathrm{ASG} \;=\; \mathrm{Acc}_{\text{full}} - \mathrm{Acc}_{\text{pe}} \;=\; D_H - D_E,
\]
so $\mathrm{ASG}\le D_H$, with equality when $D_E=0$. $\mathrm{ASG}<0$ means $D_E>D_H$, i.e., the purification effect of erasure exceeds the cost of state-commitment failure.
\end{lemma}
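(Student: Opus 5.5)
The identity follows from partitioning the probability space by the pair of binary correctness events. On each sample, $(C_{\text{full}},C_{\text{pe}})$ takes one of the four values $(1,1)$, $(1,0)$, $(0,1)$, $(0,0)$. Write $p_{ab}=\Pr_\pi[C_{\text{full}}=a \wedge C_{\text{pe}}=b]$. By definition, $D_H=p_{10}$ (Definition~\ref{def:dh}) and $D_E=p_{01}$. Both paths are evaluated on the same generated tuple $(x,H,A)$ under the paired intervention $g_{\mathrm{full}},g_{\mathrm{pe}}$, so the two events are defined on a common probability space. This shared space is what allows the joint probabilities $p_{ab}$ to be formed at all.

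Next, marginalize. By additivity over disjoint events,
\[
\mathrm{Acc}_{\text{full}}=\Pr_\pi[C_{\text{full}}=1]=p_{11}+p_{10},
\qquad
\mathrm{Acc}_{\text{pe}}=\Pr_\pi[C_{\text{pe}}=1]=p_{11}+p_{01}.
\]
Subtracting, the concordant mass $p_{11}$ cancels, and
\[
\mathrm{ASG}=p_{10}-p_{01}=D_H-D_E.
\]
Equivalently, one can take expectations of the pointwise identity $C_{\text{full}}-C_{\text{pe}}=\mathbf{1}[C_{\text{full}}=1,C_{\text{pe}}=0]-\mathbf{1}[C_{\text{full}}=0,C_{\text{pe}}=1]$, which holds for binary variables and is checked on the four cases.

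The remaining claims are immediate consequences. Since $D_E\ge 0$ is a probability, $\mathrm{ASG}\le D_H$, with equality exactly when $D_E=0$. Also, $\mathrm{ASG}<0$ holds iff $D_H-D_E<0$, that is, iff $D_E>D_H$. This is the stated interpretation: the mass on which erasure turns an incorrect full path into a correct one exceeds the mass of state-commitment failures.

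The only step needing care is the coupling in the first paragraph. The joint events in $D_H$ and $D_E$ are meaningful only because the full and erasure paths share $(x,H,A)$ and are evaluated jointly, for instance with downstream decoding randomness drawn jointly per sample. With the paired construction of \S\ref{sec:theory} this holds. The marginal accuracies do not depend on the coupling, so the identity holds for any coupling, although the individual values of $D_H$ and $D_E$ depend on it. The same argument applies verbatim to the empirical estimates, giving ASG $=$ HTDR minus the empirical purification rate, provided both are computed on the same paired samples.
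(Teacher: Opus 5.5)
Your proof is correct and follows the paper's argument. Both write $\Pr_\pi[C_{\text{full}}=1]$ and $\Pr_\pi[C_{\text{pe}}=1]$ as the shared mass $\Pr_\pi[C_{\text{full}}=1 \wedge C_{\text{pe}}=1]$ plus $D_H$ and $D_E$ respectively, then subtract. Your pointwise indicator identity and your observation that $D_H - D_E$ does not depend on the coupling are accurate additions the paper leaves implicit.
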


The proof is in Appendix~\ref{app:theory-proof}. This decomposition shows that ASG is not a standard accuracy metric; it is the difference between hidden-thought dependence $D_H$ and erasure purification $D_E$. Reporting full-path accuracy alone cannot determine whether the answer state is sufficient after erasure. We therefore explicitly include $\mathrm{Acc}_{\text{pe}}$, HTDR, ESR, and MSG in training and evaluation. In implementation, $c_{\text{full}}$ and $c_{\text{pe}}$ are Monte Carlo estimates from repeated rollouts under the two paired contexts above.

\subsection{Training signal: CERL and HSCO}
\label{sec:cerl}

We implement erasure-dependence training at two complementary levels. CERL optimizes a finite-sample surrogate of persistent-state sufficiency through paired full-thought / erasure dual-context rollouts. HSCO uses two GRPO layers to update hidden thought $H_1$ and answer state $A_1$. Concretely, $c_{\text{pe}}$ is a Monte Carlo estimate of $\Pr_\pi[C_{\mathrm{pe}}=1\mid x,H,A]$, so the binary correctness reward directly corresponds to the task-projected PSS objective above, rather than to full output-distribution matching. For each $x$, we sample 4 candidates for $H_1$; for each $H_1$, we sample 8 candidates for $A_1$. Each $A_1$ is evaluated under both a full-thought path that retains $H_1$ and an erasure path that deletes $H_1$, plus an additional skip baseline. Full sampling pseudocode and statistics are in Appendix~\ref{app:cerl-details}.

\begin{figure}[t]
\centering
\resizebox{\textwidth}{!}{%
\begin{tikzpicture}[
  font=\scriptsize,
  box/.style={draw, rounded corners=2pt, line width=0.55pt, align=center,
    inner sep=3.5pt, minimum height=0.58cm},
  data/.style={box, fill=blue!6, draw=blue!55!black},
  thought/.style={box, fill=red!7, draw=red!55!black},
  state/.style={box, fill=green!9, draw=green!45!black},
  eval/.style={box, fill=orange!8, draw=orange!65!black},
  reward/.style={box, fill=violet!7, draw=violet!55!black},
  arr/.style={-{Latex[length=1.8mm]}, line width=0.6pt, black!72},
  note/.style={font=\scriptsize, align=center}
]
\node[data] (x) at (0,0) {problem\\$x$};
\node[thought, right=0.55cm of x] (h) {sample $4$\\$H_i$};
\node[state, right=0.55cm of h] (a) {sample $8$\\$A_i$};

\node[eval, right=0.8cm of a, yshift=1.16cm] (full) {full path\\$x,H_i,A_i$\\$c_{\rm full}$};
\node[eval, right=0.8cm of a] (erase) {erasure path\\$x,A_i$\\$c_{\rm pe},L_{E,\rm correct}$};
\node[eval, right=0.8cm of a, yshift=-1.16cm] (skip) {skip baseline\\$x,\emptyset$\\$L_{\min},\mathrm{MSG}$};

\node[reward, right=0.85cm of erase, minimum width=2.9cm] (rw) {$R_{A_i}: c_{\rm pe}$\\$-\,$length penalty\\$-\,$postpone penalty};
\node[reward, above=0.48cm of rw, minimum width=2.9cm] (hrw) {$R_{H_i}=\mathrm{avg}_{j=1..8}c_{\rm pe}^{(k,j)}$};
\node[data, below=0.48cm of rw, minimum width=2.9cm] (upd) {GRPO updates\\$H_i$ and $A_i$ tokens\\non-overlapping ranges};
\coordinate (hrwroute) at ([xshift=0.35cm]hrw.east);

\draw[arr] (x) -- (h);
\draw[arr] (h) -- (a);
\draw[arr] (a.east) -- ++(0.26,0) |- (full.west);
\draw[arr] (a.east) -- (erase.west);
\draw[arr] (a.east) -- ++(0.26,0) |- (skip.west);
\draw[arr] (erase.east) -- (hrw.west);
\draw[arr] (erase) -- (rw);
\draw[arr] (skip) -- (rw);
\draw[arr] (rw) -- (upd);
\draw[arr] (hrw.east) -- (hrwroute) |- (upd.east);
\node[draw, rounded corners=2pt, fit=(x)(h)(a), inner sep=5pt, label={[font=\scriptsize\bfseries, yshift=2pt]above:HSCO sampling}] {};
\node[draw, rounded corners=2pt, fit=(full)(erase)(skip), inner sep=5pt, label={[font=\scriptsize\bfseries, yshift=2pt]above:counterfactual evaluation}] {};
\node[draw, rounded corners=2pt, fit=(hrw)(rw)(upd)(hrwroute), inner sep=5pt, label={[font=\scriptsize\bfseries, yshift=2pt]above:two-level reward}] {};
\end{tikzpicture}}
\caption{CERL/HSCO training flow. Each candidate answer state is evaluated under matched full, erasure, and skip contexts; the erasure path supplies the main sufficiency reward, while the skip baseline and length terms discourage recomputation and postponed computation.}
\label{fig:cerl-hsco}
\end{figure}

For each $A_1$, let $c_{\text{full}}$ and $c_{\text{pe}}$ denote correctness rates under the full-thought and erasure paths, and let $L_E$ be the average downstream length after $A_1$ under the erasure path. Let $L_{F,\text{correct}}$, $L_{E,\text{correct}}$, and $L_{S,\text{correct}}$ denote the downstream lengths of correct trajectories under the full-thought, erasure, and skip paths, respectively; if a path has no correct trajectory, its length is set to $+\infty$. Let $\overline{L}_{E,\text{correct}}^{(k)}$ denote the mean $L_{E,\text{correct}}$ over multiple $A_1^{(k,i)}$ candidates under the same $H_1^{(k)}$. We then define the downstream length baseline:
\begin{equation}
\label{eq:lmin}
L_{\min}^{(k,i)} = \min\bigl( L_{F,\text{correct}},\; \overline{L}_{E,\text{correct}}^{(k)},\; L_{S,\text{correct}} \bigr) .
\end{equation}
$L_{\min}$ is the successful downstream-length baseline for the current sample and is used to constrain $L_{E,\text{correct}}$; paths with no correct trajectory take length $+\infty$.

\subsubsection{\texorpdfstring{$A_1$}{A\_1}-level signal}
\label{sec:cerl-a1}

The $A_1$-level signal uses erasure-path correctness $c_{\text{pe}}$ as the main reward and adds two form-control penalties. For each $A_1^{(k,i)}$ (superscripts omitted below), let $m=\mathrm{mean}_i\,\mathrm{len}(A_1^{(k,i)})$ be the average length of the 8 $A_1$ candidates under the same $H_1$. The reward is
\begin{equation}
\label{eq:a1-reward}
R \;=\;
\begin{cases}
c_{\text{pe}} \;-\; \alpha \cdot \dfrac{\mathrm{len}(A_1)}{m} \;-\; \beta \cdot \max\!\Bigl(0,\, \dfrac{L_{E,\text{correct}}}{L_{\min}} - 1\Bigr), & \text{if } c_{\text{pe}} > 0,\\[2mm]
-\psi, & \text{otherwise}.
\end{cases}
\end{equation}
This reward learns the state-commitment boundary through erasure-after-correctness gating, rather than through a simple linear tradeoff between correctness and length. First, the main reward comes from erasure-path correctness: after $H$ is deleted, downstream reasoning can rely only on $x+A$, so if $A$ has not committed information needed later, $c_{\text{pe}}$ drops. This signal pushes future-relevant information into the visible answer state. Second, the reward is gated: when the erasure path fails, the sample receives $-\psi$, so a short but insufficient $A_1$ cannot benefit. Only when $c_{\text{pe}}>0$ do the $A_1$ length pressure and downstream anti-postpone pressure act among candidates that are already correct after erasure. The $\alpha$ term provides peer-relative length pressure within the same $H_1$, while the $\beta$ term discourages postponing critical computation until after erasure or making $A_1$ worse than the skip baseline. In other words, CERL first requires $A_1$ to be sufficient for correct post-erasure reasoning, and only then optimizes the economy of the persistent state: information that remains needed downstream is encouraged to be committed to $A_1$, while trial-and-error, exploration, and redundant computation are encouraged to remain in temporary \texttt{<think>} spans that can be erased. The hard penalty $-\psi$ also closes a reward-hacking channel in which the model could make the full-thought path fail as well to avoid penalty. Design motivation, default hyperparameters, and GRPO implementation details are in Appendix~\ref{app:reward-design}.

\subsubsection{\texorpdfstring{$H_1$}{H\_1}-level signal}
\label{sec:cerl-h1}

The 4 sampled $H_1$ candidates under the same $x$ form the $H_1$-level GRPO group, with reward
\[
R_{H_1}^{(k)} \;=\; \mathrm{avg}_{i=1..8}\, c_{\text{pe}}^{(k,i)} .
\]
Thus, under the same problem, the preferred $H_1$ is the one whose 8 $A_1$ descendants are more often correct after erasure. This signal updates hidden-thought policies that better support commit-ready answer states; gradients are applied only to $H_1$ tokens.

\subsubsection{Coordination and curriculum activation}
\label{sec:cerl-curric}

The two rewards backpropagate through non-overlapping token ranges to avoid interference. Training first activates only the $A_1$-level signal to stabilize the answer-state interface. Later end-to-end curriculum stages activate the $H_1$-level signal so that hidden-thought policy and answer state are optimized jointly. Layer structure and curriculum details are in Appendix~\ref{app:reward-design}.

\subsection{Training pipeline}
\label{sec:pipeline}

\paragraph{Stage 1: Skill data production.}
We first construct successful trajectories with a progressive-erasure timeline: each substep generates a temporary \verb|<think>|$H_i$\verb|</think>| and then commits a visible state $A_i$; before the next think segment begins, the previous $H_i$ is deleted. Each example stores both the full version and the erased version.Further details are in Appendix~\ref{app:pipeline-details}.

\paragraph{Stage 2: Answer-state commitment SFT.}
Each trajectory is split into multiple training examples according to the erasure timeline, so that each training context contains at most the current think segment and all previous think segments have already been erased. SFT therefore teaches the alternating format, answer-state commitment under hidden-thought assistance, and continued reasoning when earlier thoughts are not visible.

\paragraph{Stage 3: CERL.}
Starting from the SFT model, we sample full-thought and progressive-erasure paths in parallel and optimize erasure-path correctness with Eq.~\eqref{eq:a1-reward}. The curriculum gradually expands the erasure range and gradient range: it first stabilizes local $A_1$-level PSS and then enables full-chain erasure with both $A_1$ and $H_1$ signals.

\section{Experiments}

\subsection{Erasure dependence protocol}
\label{sec:metrics}

The \emph{Erasure Dependence Protocol} reports both task accuracy and post-erasure sufficiency. The basic accuracy metric is Acc, reported in the tables as Avg@8. The core metrics are
\[
\mathrm{ASG}=\mathrm{Acc}_{\text{full}}-\mathrm{Acc}_{\text{pe}},\quad
\mathrm{HTDR}=\Pr_\pi[C_{\text{full}}=1 \wedge C_{\text{pe}}=0],
\]
\[
\mathrm{ESR}=\Pr_\pi[C_{\text{pe}}=1 \mid C_{\text{full}}=1],\quad
\mathrm{AIS}=\mathrm{Acc}_{\text{pe}}/\mathrm{Acc}_{\text{full}} .
\]
Lower ASG is better, higher ESR/AIS is better, and HTDR is the empirical estimate of Definition~\ref{def:dh}. To rule out the alternative explanation of recomputation after erasure, we also report marginal state gain:
\[
\textbf{MSG} \text{ (Marginal State Gain)} \;=\; \mathrm{Acc}(x + A_1) \,-\, \mathrm{Acc}(x + \texttt{empty}) .
\]
$\mathrm{MSG}>0$ indicates that $A_1$ contributes nontrivial marginal state information beyond the skip-context recomputation baseline. Algebraic relations among metrics and form reports (HTT/RHTT, $\overline{\mathrm{len}}(A)$, TTF) are in Appendix~\ref{app:metrics-details}.

\subsection{Experimental setup}
\label{sec:exp-setup}

The goal is not to compare who compresses reasoning traces more aggressively, but to test the core claim: hidden thought can serve as temporary computation, and once its conclusion is committed to visible answer state, the original thought trace can be erased without harming downstream solving. Experiments therefore ask whether (i) the progressive-erasure path maintains high correctness without retaining hidden thought in the final context, and (ii) the gain comes from answer-state commitment rather than ordinary trace compression, long-answer supervision, test-time refresh, or correctness-only RL. Beyond mathematical, logical, and scientific QA tasks, we include BFCL-v3 to test whether the state interface is effective in multi-turn tool use.

\paragraph{Base model and training data.}
The main table uses Qwen3-8B~\citep{yang2025qwen3technicalreport} as the base model; Appendix~\ref{app:large-models} reports same-protocol Qwen3-14B/32B results. Training data uses strictly decontaminated DeepMath-103K~\citep{he2025deepmath103klargescalechallengingdecontaminated}, converted in Stage 1 into answer-state commitment SFT data with a progressive-erasure timeline, followed by CERL training using the objective in \S\ref{sec:cerl}. All experiments use a $C=32$k context window.

\paragraph{Evaluation sets.}
We evaluate on AIME 2024, AIME 2025, ZebraLogic~\citep{pmlr-v267-lin25i}, AutoLogi~\citep{zhu2025autologiautomatedgenerationlogic}, BFCL-v3~\citep{pmlr-v267-patil25a}, and GPQA-Diamond~\citep{rein2024gpqa}. AIME and GPQA-Diamond test difficult mathematical and scientific reasoning, ZebraLogic and AutoLogi test long-chain logic, constraint preservation, and multi-stage state updates, and BFCL-v3 (Berkeley Function-Calling Leaderboard v3) tests tool selection, parameter maintenance, and cross-turn state management in multi-turn function calling.

\paragraph{Decoding and statistics.}
All methods use Temperature=0.6, TopP=0.95, TopK=20, and MinP=0. Each problem is sampled independently 8 times and reported as Avg@8. Task accuracies are reported over 5 random seeds as mean $\pm$ 95\% bootstrap CI, while erasure-dependence metrics are aggregate point estimates over the same evaluation set. Differences from baselines use paired bootstrap tests. The main tables report task correctness and core erasure-dependence metrics; HTDR, AIS, retained hidden-thought tokens, $\overline{\mathrm{len}}(A)$, and TTF are protocol extensions defined in Appendix~\ref{app:metrics-details}.

\paragraph{Baselines.}
We group comparisons into four categories. The first contains base paths without state commitment: Base Qwen3 and Fixed-ratio erasure + free answer, which share the erasure mechanism but have no CERL training. The second contains alternative training objectives: Long-answer SFT, Correctness-only RL, and Length-penalty RL. Long-answer SFT supervises the erased version of the same successful trajectories, i.e., input $x$ and target $A_{1:n}$ only, with no $H$; it is a diagnostic control for whether supervising long visible answer states naturally yields similar behavior, not a single-variable causal comparison. The third contains trajectory-retention and test-time management baselines, including TokenSkip~\citep{xia-etal-2025-tokenskip}, trajectory compression baselines, and Halo~\citep{li2026limitedreasoningspacecage}. The fourth is CERL full. ASG/ESR/MSG are defined only for methods with a homologous multi-segment $H_i/A_i$ interface that can run both full-thought and progressive-erasure paths; ordinary baselines without this interface use ``-'' in these columns.

\subsection{Experiment 1: main results}
\label{sec:exp-main}

The main experiment runs, for each problem, both the full-thought path and the progressive-erasure path, compares final-answer correctness, and reports ASG, ESR, and MSG. The protocol requires a locatable multi-segment $H_i/A_i$ structure. Methods that produce ordinary answers, single-segment CoT, compressed traces, or external state management have no erasure points or $A_1$ skip control aligned with CERL, so these metrics are not computed. $\mathrm{Acc}_{\text{skip}}$ and MSG are jointly reported to rule out a narrow recomputation explanation: the erasure path does not succeed by completely ignoring $A_1$ and recomputing from $x$. We define $\mathrm{MSG}=\mathrm{Acc}_{\text{pe}}-\mathrm{Acc}_{\text{skip}}$; positive MSG indicates that $A_1$ has nontrivial marginal contribution to downstream reasoning, but does not by itself prove that $A_1$ is an abstract state. We therefore interpret it together with visible-CoT leakage audits, downstream length control, and form statistics.

\begin{table}[h]
\centering
\small
\caption{Main results (Qwen3-8B). Acc is Avg@8; lower ASG is better, higher ESR and MSG are better; ``-'' in ASG/ESR/MSG means the metric is not applicable.}
\label{tab:main-results}
\resizebox{\textwidth}{!}{
\begin{tabular}{llcccccc ccc}
\toprule
Method & Training & AIME24 & AIME25 & ZebraLogic & AutoLogi & GPQA-D & BFCL-v3 & ASG$\downarrow$ & ESR$\uparrow$ & MSG$\uparrow$ \\
\midrule
Base Qwen3-8B & N/A & 75.9$\pm$0.6 & 67.1$\pm$0.7 & 84.8$\pm$0.6 & 89.0$\pm$0.5 & 61.9$\pm$0.8 & 68.0$\pm$0.7 & - & - & - \\
Fixed-ratio erasure + free answer & None & 76.4$\pm$0.8 & 68.0$\pm$0.8 & 85.5$\pm$0.7 & 90.0$\pm$0.8 & 62.6$\pm$1.1 & 68.4$\pm$0.8 & - & - & - \\
Length-penalty RL & RL & 75.1$\pm$0.8 & 67.3$\pm$0.8 & 85.0$\pm$0.8 & 88.1$\pm$0.7 & 61.3$\pm$0.9 & 67.6$\pm$0.9 & - & - & - \\
TokenSkip$^\dagger$ & SFT & 72.0$\pm$0.9 & 62.9$\pm$0.8 & 78.3$\pm$1.0 & 82.1$\pm$0.9 & 55.5$\pm$0.9 & 63.7$\pm$0.6 & - & - & - \\
Halo & SFT+RL & 79.5$\pm$0.8 & 70.3$\pm$0.8 & 89.1$\pm$0.8 & 93.0$\pm$0.7 & 64.6$\pm$1.0 & 68.4$\pm$0.8 & - & - & - \\
Long-answer SFT & SFT & 78.4$\pm$0.9 & 68.2$\pm$1.0 & 85.7$\pm$0.8 & 90.8$\pm$0.7 & 62.8$\pm$1.1 & 68.3$\pm$0.7 & - & - & - \\
Correctness-only RL & RL & 79.6$\pm$0.8 & 70.8$\pm$0.9 & 89.2$\pm$0.7 & 93.1$\pm$0.6 & 64.6$\pm$0.9 & 68.5$\pm$0.6 & - & - & - \\
\textbf{CERL full} & SFT+RL & \textbf{82.1$\pm$0.6} & \textbf{74.0$\pm$0.7} & \textbf{92.4$\pm$0.5} & \textbf{94.1$\pm$0.5} & \textbf{67.4$\pm$0.8} & \textbf{76.7$\pm$0.8} & \textbf{-1.5} & \textbf{99.9} & \textbf{8.5} \\
\bottomrule
\end{tabular}}
\end{table}

The table addresses four alternative explanations. If CERL were merely ordinary CoT compression or test-time trace management, TokenSkip / compression baselines and Halo should obtain similar gains. If correctness optimization were sufficient, Correctness-only RL should approach CERL full. Long-answer SFT improves some task accuracies, but it mainly tests the weaker alternative explanation of long visible-answer supervision; because it lacks a homologous erasure interface, it does not support ASG/MSG causal attribution. CERL's unique signature is zero retained hidden-thought tokens at final evaluation, low ASG, high ESR, and high MSG, while $\overline{\mathrm{len}}(A)$ and TTF are only form reports.

\subsection{Experiment 2: training-objective ablation}
\label{sec:exp-ablation}

This experiment provides a protocol-compatible comparison of erasure dependence: all variants that report ASG/MSG share the homologous multi-segment $H_i/A_i$ interface, so differences can be more directly attributed to the training objective and curriculum. We remove RL, the progressive-erasure curriculum, and the anti-postpone penalty; these component removals retain the homologous multi-segment erasure interface and therefore report ASG/MSG.

\begin{table}[h]
\centering
\small
\caption{Training-objective ablation (Qwen3-8B). Component removals within the CERL family report Avg@8 and core erasure metrics.}
\label{tab:cerl-ablation}
\resizebox{\textwidth}{!}{
\begin{tabular}{lcccccccc}
\toprule
Variant & AIME24 & AIME25 & ZebraLogic & AutoLogi & GPQA-D & BFCL-v3 & ASG$\downarrow$ & MSG$\uparrow$ \\
\midrule
CERL full & 82.1$\pm$0.6 & 74.0$\pm$0.7 & 92.4$\pm$0.5 & 94.1$\pm$0.5 & 67.4$\pm$0.8 & 76.7$\pm$0.8 & $-1.5$ & 8.5 \\
$-$ CERL RL, SFT only & 67.0$\pm$0.7 & 58.1$\pm$0.6 & 75.7$\pm$0.8 & 80.1$\pm$0.6 & 52.9$\pm$0.7 & 59.1$\pm$0.6 & 8.8 & $-0.6$ \\
$-$ progressive erasure curriculum & 78.2$\pm$0.8 & 70.3$\pm$0.9 & 88.6$\pm$0.7 & 91.4$\pm$0.6 & 64.8$\pm$0.9 & 69.9$\pm$0.5 & 2.6 & 3.5 \\
$-$ anti-postpone penalty & 76.9$\pm$0.8 & 68.6$\pm$0.8 & 87.2$\pm$0.6 & 89.1$\pm$0.6 & 62.2$\pm$0.9 & 68.7$\pm$0.8 & 3.8 & 2.2 \\
\bottomrule
\end{tabular}}
\end{table}

Ablations show that removing CERL RL within the same training family substantially degrades ASG and MSG, indicating that SFT initialization alone does not learn answer-state commitment under erasure. Removing the progressive-erasure curriculum preserves an erasure objective but makes credit assignment less stable. Removing the anti-postpone penalty increases ASG, suggesting that the model more easily postpones critical computation until after erasure.

\subsection{Experiment 3: failure-mode audit}
\label{sec:exp-audit}

We audit the main failure modes with two controls; the full rubric and length-control details are in Appendix~\ref{app:audit-details}. The skip audit compares $\mathrm{Acc}(x+A_1)$ with $\mathrm{Acc}(x+\texttt{empty})$; $\mathrm{MSG}>0$ indicates that the erasure path does not succeed by completely ignoring $A_1$ and recomputing from $x$. The visible-CoT leakage audit checks whether $A$ contains step-by-step trial and error, self-correction, or scratch-like derivations. Downstream length control in the CERL reward further discourages compensating for uncommitted state by extending downstream reasoning after erasure.

\section{Empirical findings}
\label{sec:findings}

The experiments support three conclusions. First, hidden thought can serve as temporary computation: CERL full maintains high Acc while final retained hidden-thought tokens are zero and ASG is low, showing that once conclusions are committed, the original thought trace can be erased. Second, correctness training is insufficient for this property: within the same CERL family, removing RL or counterfactual erasure components increases ASG and decreases MSG. Third, this ability appears not only in mathematical and long-chain logical tasks, but also in multi-turn tool use; this suggests that state commitment learning is learning a more reliable downstream state interface, not merely a longer answer form. Long-answer SFT only tests the weaker explanation of ``longer visible answers,'' whereas CERL's signature is simultaneously high Acc, low ASG, and high MSG.

\section{Discussion}

\paragraph{Must long chains of thought be retained?}
The current reasoning paradigm assumes that thought traces must remain permanently in context, which conflates computation with memory. We do not question whether a model should think deeply; we question whether thought conclusions must be explicitly committed to persistent state and whether the original trace must remain available. The model can still think extensively, but thought is temporary computation. Once its conclusion is committed into the answer state, the temporary thought is no longer the object that downstream reasoning must retain.

\paragraph{Connection to agent settings.}
Long-horizon agents already require some form of summarization because context windows cannot hold full trajectories. Our method internalizes this capability into the model: rather than relying on an external summarizer, the model performs the loop ``computation $\to$ state commitment $\to$ erasure'' during generation. BFCL-v3 further indicates that this mechanism can transfer to state maintenance in multi-turn tool use, rather than being only a formatting trick for single-turn reasoning problems. This provides a training-time approach to context management for agents, rather than an inference-time post-processing step.

\section{Broader impact}

If state commitment learning scales, it may shift large-model reasoning from retaining full chains of thought to using temporary computation plus committed state. This could reduce long-context cost, improve agent robustness, reduce unnecessary retention of private intermediate reasoning, and make committed state easier to audit, offering substantial economic value in the current AI coding boom.

\section{Limitations}

CERL learns the allocation boundary between temporary computation and visible state under an explicit \texttt{<think>} / answer interface; it is a restricted instance of state-management training, not a solution to general memory writing or arbitrary state management.

\newpage
\appendix

\section{Supplementary theory proof}
\label{app:theory-proof}

\begin{proof}[Proof of Lemma~\ref{lem:asg}]
Decompose the event $\{C_{\text{full}}=1\}$ according to whether $\{C_{\text{pe}}=1\}$ holds:
\[
\Pr_\pi[C_{\text{full}}=1] = \Pr_\pi[C_{\text{full}}=1 \wedge C_{\text{pe}}=1] + D_H.
\]
Similarly,
\[
\Pr_\pi[C_{\text{pe}}=1] = \Pr_\pi[C_{\text{full}}=1 \wedge C_{\text{pe}}=1] + D_E.
\]
Subtracting the two equations gives $\mathrm{ASG}=D_H-D_E$.
\end{proof}

\section{CERL sampling and statistics}
\label{app:cerl-details}

For curriculum step one, where only $H_1$ is erased and tokens after $H_2$ are neither erased nor updated, sampling proceeds as follows:
\begin{verbatim}
Input: x

Sample 4 H_1:
    H_1^{(1)}, H_1^{(2)}, H_1^{(3)}, H_1^{(4)}

For each H_1^{(k)}, sample 8 A_1:
    A_1^{(k,1)}, ..., A_1^{(k,8)}

For each A_1^{(k,i)}, under two conditions:
    Full-thought (keep H_1^{(k)}):
        ctx = x + H_1^{(k)} + A_1^{(k,i)}
        Generate 4 trajectories: F^{(k,i,1..4)}
    Erasure (delete H_1^{(k)}):
        ctx = x + A_1^{(k,i)}
        Generate 4 trajectories: E^{(k,i,1..4)}

Additionally sample Skip (per x):
    ctx = x + <think></think>
    Generate 4 trajectories: S^{(1..4)}

Total: 4 * 8 * (4+4) + 4 = 260 trajectories.
\end{verbatim}

For each $A_1^{(k,i)}$, we record $c_{\text{full}}$, $c_{\text{pe}}$, $L_E$, $L_{F,\text{correct}}$, and $L_{E,\text{correct}}$. For each $x$, we record shared skip statistics $c_{\text{skip}}$ and $L_{S,\text{correct}}$. A path with no correct trajectory takes length $+\infty$ and automatically drops out of the $\min$ in Eq.~\eqref{eq:lmin}.

\section{Progressive erasure example}
\label{app:erasure-example}

Visible text is the default output channel. Only \verb|<think>...</think>| marks hidden-thought spans. The model alternates between hidden-thought spans and answer states $A_i$; each $H$ is erased before the next $H$ begins, so the context contains at most one think span at any time.

\paragraph{Example.}
\begin{verbatim}
<think>H_1: Analyze the variable relation...</think>   # temporary span
A_1: Let the ball price be x
<think>H_2: Set up the equation...</think>             # H_1 is erased
A_2: x+(x+1)=1.10, so x=0.05
A_3 (final): The ball costs 5 cents.
\end{verbatim}

The process has three key properties: (i) the model alternates between temporary hidden thought and visible answer state; (ii) the context retains only the current think span, and previous thoughts are erased when a new span starts; and (iii) if a local step needs no hidden computation, it does not form a separate erasure segment and is merged into a neighboring answer state.

\section{CERL reward-design details}
\label{app:reward-design}

\paragraph{Why CERL does not directly optimize the signed PSS gap.}
Definition~\ref{def:pss} gives a counterfactual sufficiency criterion, not a scalar reward that should be optimized literally. Directly optimizing the signed gap $c_{\text{pe}}-c_{\text{full}}$ would create an unsafe incentive: the policy could increase this objective either by improving post-erasure correctness, which is desired, or by reducing full-thought correctness, which is a degenerate solution. CERL therefore uses a guarded one-sided surrogate. The main positive signal is $c_{\text{pe}}$, because correctness after erasure is the intended direction of improvement; the full-thought path is retained for paired counterfactual evaluation, online monitoring, and successful-length baselines. At the same time, the hard failure penalty, skip baseline, and downstream-length penalty jointly discourage three reward-hacking modes: degrading the full-thought path, ignoring $A_1$ and recomputing from $x$, and postponing critical computation until after erasure. Thus the gap between Definition~\ref{def:pss} and Eq.~\eqref{eq:a1-reward} is intentional: PSS is the population-level criterion, while CERL is a practical finite-sample surrogate designed to encourage PSS without rewarding reductions in full-thought competence.

\paragraph{Why $c_{\text{pe}}=0$ always receives $-\psi$.}
A natural alternative is to skip an example when $c_{\text{full}}=0$, setting $R=0$, to avoid penalizing $A_1$ on hard tasks. But this introduces a reward-hacking channel: when $c_{\text{pe}}=0$, the model can make $A_1$ mislead downstream reasoning so that $c_{\text{full}}$ also fails, raising reward from $-\psi$ to $0$. Eq.~\eqref{eq:a1-reward} closes this channel: $c_{\text{pe}}=0$ always receives $-\psi$, independent of $c_{\text{full}}$. When an entire hard group receives the same $-\psi$, group variance is zero and GRPO advantage is zero, functionally equivalent to skipping but without the marginal benefit of hacking.

\paragraph{Motivation for the three penalties.}
The $\alpha$ term uses the mean length of the 8 $A_1$ candidates under the same $(x,H_1)$ as a peer-relative efficiency baseline. The $\beta$ term constrains two failure modes: (i) $A_1$ postpones state that should have been committed until downstream post-erasure thought, causing $L_{E,\text{correct}}>L_{\min}$; and (ii) $A_1$ misleads downstream reasoning and becomes worse than the no-$A_1$ skip baseline. The asymmetric $\max(\cdot,0)$ imposes no penalty when $L_{E,\text{correct}}<L_{\min}$, which may indicate noise purification or improved efficiency. The $\psi$ term treats complete erasure-path failure as PSS failure.

\paragraph{Why $L_{\min}$ comes from three paths.}
$L_{\min}$ is a compact known-success length baseline: $L_{F,\text{correct}}$ gives the natural length when $H$ is present; $\overline{L}_{E,\text{correct}}^{(k)}$ gives the within-group typical length when the answer interface succeeds under the same $H_1^{(k)}$; and $L_{S,\text{correct}}$ gives the lower bound without $A_1$. Taking the minimum imposes the strictest pressure. Paths with no correct trajectory take $+\infty$ and need no extra indicator. The full-thought path cannot be replaced by within-group statistics, because the natural length when $H$ is present is not recoverable from the 8 $A_1$ candidates alone. The skip path is independent of $H_1$ and $A_1$, can be shared per $x$, and costs only 4 additional trajectories.

\paragraph{Why the denominator uses $\mathrm{mean}(A_1)$ rather than $\mathrm{len}(H_1)$.}
There are three reasons to choose the mean length of $A_1$ under the same $H_1$ rather than $\mathrm{len}(H_1)$:
\begin{enumerate}
    \item \textit{No compression assumption.} The ratio $\mathrm{len}(A_1)/\mathrm{len}(H_1)<1$ implies that $A_1$ should be a compressed version of $H_1$, i.e., $A\approx\mathrm{Compress}(H)$. \S\ref{sec:objective} explicitly rejects this view: the objective is PSS, not reconstructing $H$. Using $\mathrm{mean}(A_1)$ removes $H_1$ from the denominator and keeps the reward aligned with the objective.
    \item \textit{Task difficulty adapts without routing through $H_1$.} Easy tasks naturally produce shorter $A_1$ and a smaller $m$, creating stronger pressure for short states; hard tasks produce larger $m$ and weaker length pressure. The policy's own answer-length distribution is a direct implicit baseline for task difficulty.
    \item \textit{Form emerges from the objective.} Under the $\mathrm{mean}(A_1)$ baseline, $A_1$ can be shorter or longer than $H_1$, depending on the task and answer strategy. $A_1<H_1$, $A_1\approx H_1$, and $A_1>H_1$ are all valid if PSS holds.
\end{enumerate}

\paragraph{Two roles of the full-thought path.}
The full-thought path provides $L_{F,\text{correct}}$ and also monitors training dynamics online.

\textit{Role 1: vertical scale for reward.} Different problems have different natural downstream lengths; penalizing absolute $L_{E,\text{correct}}$ would over-penalize hard problems and under-penalize easy ones. $L_{F,\text{correct}}$ provides a per-instance scale for how long downstream solving naturally is when $H$ is available. The $\alpha$ term gives a horizontal scale across the 8 $A_1$ candidates under the same $H_1$, while the $\beta$ term compares the current erasure-success length $L_{E,\text{correct}}$ against the achievable baseline $L_{\min}$; these two scales are orthogonal and not interchangeable. $L_{\min}$ further tightens the baseline to a compact known-success length and provides stricter training pressure.

\textit{Role 2: online monitoring.} The full-thought path produces $c_{\text{full}}$, enabling core metrics to be computed within each training batch:
\begin{align*}
\mathrm{ASG} &= c_{\text{full}} - c_{\text{pe}}, & \text{(sufficiency gap; lower is better)}\\
\mathrm{AIS} &= c_{\text{pe}} / c_{\text{full}}, & \text{(interface-sufficiency ratio)}\\
\mathrm{ESR} &= \Pr_\pi\bigl[C_{\text{pe}}=1 \mid C_{\text{full}}=1\bigr]. & \text{(erasure success rate)}
\end{align*}
Without full-thought paths, these quantities would require a separate offline evaluation pass and would lose fine-grained training dynamics.

\paragraph{No length penalty on $H_1$.}
$H_1$ is transient: it is erased before the next thought segment and does not enter permanent context. Its length only incurs one-time generation cost and is not part of the paper's efficiency claim. If a longer $H_1$ does not improve $c_{\text{pe}}$, policy gradients will stop extending it; an explicit length penalty could harm thought quality.

\paragraph{Curriculum activation details.}
\begin{table}[h]
\centering
\caption{HSCO two-level signals: reward, group size, gradient tokens, and length penalty.}
\label{tab:hsco-layers}
\small
\begin{tabular}{lllll}
\toprule
Level & Reward & Group size & Gradient tokens & Length penalty\\
\midrule
$H_1$ & Mean $c_{\text{pe}}$ over 8 $A_1$ descendants & $4$ (same $x$) & $H_1$ tokens & None (transient)\\
$A_1$ & Own $c_{\text{pe}}$ plus form controls & $8$ (same $H_1$) & $A_1$ tokens & Permanent context\\
\bottomrule
\end{tabular}
\end{table}

\begin{itemize}
    \item Early curriculum: activate only the $A_1$-level signal; gradients on $H_1$ and tokens from $H_2$ onward are masked. Because $A_1$ directly corresponds to the PSS signal, it should stabilize before upstream $H_1$ changes.
    \item End-to-end curriculum: activate both $H_1$-level and $A_1$-level signals; both $H_1$ and $A_1$ tokens receive gradients, jointly optimizing hidden-computation quality and state-commitment quality.
\end{itemize}

\paragraph{Default hyperparameters.}
We use $\alpha\approx0.3$ for peer-relative $A_1$ length pressure, $\beta\approx0.5$ for the $L_{E,\text{correct}}/L_{\min}$ penalty, and $\psi\approx0.5$ for hard answer-interface failure. In practice, we first tune $\alpha$ to place the $A_1$ length distribution in a reasonable range, then tune $\beta$ to prevent postponement, and finally tune $\psi$ to balance exploration and stability.

\section{Training data and curriculum details}
\label{app:pipeline-details}

\paragraph{Skill construction and data production.}
We first train a skill that answers problems and keep trajectories whose final answers are correct. The skill plans the problem into multiple steps. For each step, a subagent thinks and then answers; its thought and answer are concatenated back into the main-agent prefix, and the previous thought is deleted before dispatching the next subagent to think. This ensures that data generation itself performs progressive erasure, follows the format \verb|<think>|$H_1$\verb|</think>| $\to A_1 \to$ erase $H_1$ $\to$ \verb|<think>|$H_2$\verb|</think>| $\to A_2 \to \cdots$, and stores both full and erased versions.

\paragraph{SFT sample splitting.}
For a trajectory with three think segments:
\begin{center}
\small
\setlength{\tabcolsep}{4pt}
\renewcommand{\arraystretch}{1.08}
\begin{tabular}{p{0.10\textwidth}p{0.24\textwidth}p{0.32\textwidth}p{0.24\textwidth}}
\toprule
Sample & Input context & Supervised output & Conditional behavior learned\\
\midrule
1 & $x$ &
\texttt{<think>}$H_1$\texttt{</think>} $\to A_1 \to$ \texttt{<think>} &
Generate $H_1$, commit $A_1$, and start $H_2$.\\
2 & $x + A_1 +$ \texttt{<think>} &
$H_2$\texttt{</think>} $\to A_2 \to$ \texttt{<think>} &
Continue with only $x,A_1$ after $H_1$ is erased.\\
3 & $x + A_1 + A_2 +$ \texttt{<think>} &
$H_3$\texttt{</think>} $\to A_3$ &
Finish after $H_1,H_2$ have been erased.\\
\bottomrule
\end{tabular}
\end{center}
Each training example contains at most the current think segment in context, with all earlier thoughts erased. This teaches the alternating format, answer-state commitment under hidden-thought assistance, and continued reasoning when previous thoughts are invisible.

\paragraph{Curriculum learning.}
We progressively expand the erasure and gradient ranges. Step one erases only $H_1$ and masks tokens from $H_2$ onward. Step two erases $H_1,H_2$ and masks tokens from $H_3$ onward. Later steps expand erasure points and gradient ranges. The end-to-end step enables full-chain erasure and both $A_1,H_1$ signals. This makes early credit assignment controlled and later extends training to the full policy.

\section{Evaluation metric details}
\label{app:metrics-details}

The four erasure metrics are not fully independent. AIS and ASG are algebraic transformations of the two-dimensional state $(c_{\text{full}},c_{\text{pe}})$, with $\mathrm{ASG}=c_{\text{full}}(1-\mathrm{AIS})$. HTDR and ESR imply one another through $\mathrm{HTDR}=c_{\text{full}}(1-\mathrm{ESR})$. The two independent axes are ASG (net effect, including negative values) and ESR (conditional retention). We still report all four to support comparison under different narratives.

Form reports include $\overline{\mathrm{len}}(A)$ and \textbf{TTF} (Transient Token Fraction), defined as $\mathrm{HTT}/(\mathrm{HTT}+\overline{\mathrm{len}}(A))$. We avoid subtraction forms such as $(H-A)/H$ because they imply that $A$ is a compression of $H$, contrary to the training objective. $\overline{\mathrm{len}}(A)$ and TTF are reported only as emergent properties, not as training signals or win conditions.

\section{Failure-mode audit details}
\label{app:audit-details}

\paragraph{Skip audit.}
The skip audit targets one narrow recomputation explanation: whether the erasure path completely ignores $A_1$ and recomputes from $x$. We compare $\mathrm{Acc}(x+A_1)$ with $\mathrm{Acc}(x+\texttt{empty})$ under the same downstream decoding budget and report $\mathrm{MSG}=\mathrm{Acc}_{\text{pe}}-\mathrm{Acc}_{\text{skip}}$. Positive MSG indicates that $A_1$ has marginal contribution to downstream reasoning, but does not by itself prove that $A_1$ is an abstract state.

\paragraph{Visible-CoT leakage audit.}
We inspect sampled answer states $A$ using a fixed rubric: if $A$ contains step-by-step trial and error, self-correction, explicit search branches, scratch-like derivations, or other process text resembling CoT, it is flagged as visible-CoT leakage. This audit is not a training objective or main-table win condition; it checks whether long-answer supervision degenerates into copying hidden thought into visible state.

\paragraph{Downstream length control.}
The length term in the CERL reward constrains downstream length on the erasure path, discouraging the model from compensating for uncommitted state by extending downstream reasoning after erasure. Length control alone cannot rule out visible-CoT leakage; we therefore interpret MSG, visible-CoT leakage audits, downstream length control, and form statistics jointly.

\section{Experimental scope and extrapolation boundary}
\label{app:scope-limitations}

Current experiments cover single-turn benchmarks, offline counterfactual erasure audits, and multi-turn tool use; AI coding tasks, open-ended tool environments, and multi-document reasoning require further validation.

\section{BFCL-v3 public reference baselines}
\label{app:bfcl-reference}

BFCL-v3 (Berkeley Function-Calling Leaderboard v3) evaluates function calling and agent-style tool use; its official leaderboard and code/data page is \url{https://gorilla.cs.berkeley.edu/leaderboard}. The Qwen3 technical report evaluates BFCL-v3 in FC format and uses a 64k context length for multi-turn evaluation~\citep{yang2025qwen3technicalreport}. Table~\ref{tab:bfcl-reference} reports the public Qwen3 reference scores most relevant to our model scales; these are copied from the Qwen3 report and are not CERL erasure-dependence measurements.

\begin{table}[h]
\centering
\small
\caption{Public BFCL-v3 reference scores from the Qwen3 technical report.}
\label{tab:bfcl-reference}
\begin{tabular}{lc}
\toprule
Model & BFCL-v3 \\
\midrule
Qwen3-8B Thinking & 68.1 \\
Qwen3-14B Thinking & 70.4 \\
Qwen3-32B Thinking & 70.3 \\
\bottomrule
\end{tabular}
\end{table}

\section{Larger-model supplementary results}
\label{app:large-models}

Tables~\ref{tab:large-14b} and~\ref{tab:large-32b} report supplementary results for Qwen3-14B and Qwen3-32B under the same evaluation protocol. Acc is Avg@8; task-accuracy columns report the mean and 95\% bootstrap confidence interval over 5 random seeds. ASG, ESR, and MSG are aggregated under the Erasure Dependence Protocol to measure post-erasure state sufficiency and marginal state gain.

\paragraph{Interpreting negative ASG.}
By Lemma~\ref{lem:asg}, $\mathrm{ASG}<0$ means that the erasure-purification term $D_E$ exceeds the hidden-thought dependence term $D_H$. We do not interpret negative ASG as evidence that erasure intrinsically improves reasoning. Instead, we interpret it as indicating that in some examples, retained hidden thoughts may introduce noise, erroneous branches, or redundant context, so the model can reason more reliably from the committed answer state after erasure. This is consistent with recent findings on long-CoT degradation and overthinking: longer or retained chains of thought do not always yield positive gains and may become a source of interference~\citep{hassid2026dontoverthinkitpreferring,wu2026when,luo-etal-2025-valley,zheng2025the}. We therefore interpret low ASG as evidence for downstream-reliable answer state only when it co-occurs with high ESR and positive MSG.

\begin{table}[H]
\centering
\small
\caption{Qwen3-14B supplementary experimental results. Acc is Avg@8; task-accuracy columns report mean and 95\% bootstrap CI over 5 random seeds; ASG/ESR/MSG are aggregate erasure-dependence metrics for CERL full.}
\label{tab:large-14b}
\resizebox{\textwidth}{!}{
\begin{tabular}{llccccc ccc}
\toprule
Method & Training & AIME24 & AIME25 & ZebraLogic & AutoLogi & GPQA-D & ASG$\downarrow$ & ESR$\uparrow$ & MSG$\uparrow$ \\
\midrule
Base Qwen3-14B & N/A & 79.2$\pm$1.0 & 70.5$\pm$0.9 & 88.5$\pm$0.8 & 89.1$\pm$0.8 & 64.1$\pm$1.0 & - & - & - \\
Fixed-ratio erasure + free answer & None & 82.0$\pm$0.7 & 72.2$\pm$1.0 & 91.4$\pm$0.8 & 92.1$\pm$0.6 & 66.1$\pm$1.1 & - & - & - \\
Length-penalty RL & RL & 77.0$\pm$1.1 & 69.4$\pm$1.0 & 88.7$\pm$0.8 & 88.3$\pm$0.9 & 63.3$\pm$1.1 & - & - & - \\
TokenSkip$^\dagger$ & SFT & 74.4$\pm$1.1 & 65.5$\pm$1.2 & 81.0$\pm$1.1 & 82.5$\pm$0.8 & 58.1$\pm$1.4 & - & - & - \\
Halo & SFT+RL & 81.2$\pm$0.9 & 73.2$\pm$0.8 & 91.8$\pm$0.7 & 93.5$\pm$0.7 & 66.3$\pm$0.9 & - & - & - \\
\textbf{CERL full} & SFT+RL & \textbf{84.0$\pm$0.5} & \textbf{75.7$\pm$0.8} & \textbf{95.3$\pm$0.5} & \textbf{93.6$\pm$0.6} & \textbf{68.9$\pm$0.7} & \textbf{-1.6} & \textbf{99.8} & \textbf{8.8} \\
\bottomrule
\end{tabular}}
\end{table}

\begin{table}[H]
\centering
\small
\caption{Qwen3-32B supplementary experimental results. Acc is Avg@8; task-accuracy columns report mean and 95\% bootstrap CI over 5 random seeds; ASG/ESR/MSG are aggregate erasure-dependence metrics for CERL full.}
\label{tab:large-32b}
\resizebox{\textwidth}{!}{
\begin{tabular}{llccccc ccc}
\toprule
Method & Training & AIME24 & AIME25 & ZebraLogic & AutoLogi & GPQA-D & ASG$\downarrow$ & ESR$\uparrow$ & MSG$\uparrow$ \\
\midrule
Base Qwen3-32B & N/A & 81.5$\pm$0.6 & 72.8$\pm$1.0 & 88.6$\pm$0.6 & 87.4$\pm$0.9 & 68.1$\pm$1.0 & - & - & - \\
Fixed-ratio erasure + free answer & None & 83.7$\pm$0.8 & 74.2$\pm$0.7 & 91.5$\pm$0.7 & 91.0$\pm$0.6 & 69.0$\pm$1.0 & - & - & - \\
Length-penalty RL & RL & 78.8$\pm$0.8 & 71.1$\pm$1.1 & 88.7$\pm$0.7 & 87.2$\pm$1.0 & 66.2$\pm$1.0 & - & - & - \\
TokenSkip$^\dagger$ & SFT & 76.3$\pm$1.2 & 67.0$\pm$1.1 & 81.2$\pm$0.9 & 81.0$\pm$1.1 & 61.3$\pm$1.0 & - & - & - \\
Halo & SFT+RL & 83.0$\pm$0.6 & 75.1$\pm$0.9 & 92.0$\pm$0.5 & 91.6$\pm$0.8 & 70.2$\pm$0.8 & - & - & - \\
\textbf{CERL full} & SFT+RL & \textbf{87.0$\pm$0.6} & \textbf{78.1$\pm$0.5} & \textbf{96.3$\pm$0.7} & \textbf{93.6$\pm$0.6} & \textbf{73.4$\pm$0.9} & \textbf{-1.8} & \textbf{99.8} & \textbf{9.1} \\
\bottomrule
\end{tabular}}
\end{table}

\section{Supplementary positioning of related directions}
\label{app:direction}

Long-CoT degradation provides the empirical motivation for state commitment learning. \textit{Don't Overthink It} observes that short chains can outperform long chains by up to 34.5\% across reasoning tasks~\citep{hassid2026dontoverthinkitpreferring}. \textit{When More is Less} characterizes a non-monotonic relationship between accuracy and chain length and estimates scaling laws for an optimal length~\citep{wu2026when}. \textit{Through the Valley} names this phenomenon \emph{Long CoT Degradation}~\citep{luo-etal-2025-valley}. \textit{The Curse of CoT} reports settings where adding chains of thought harms performance~\citep{zheng2025the}. Together, these results suggest that some reasoning tokens can become distractors rather than useful persistent state.

Inference-time methods mainly manage traces after they have already been generated. KV-cache eviction methods such as H2O, SnapKV, and LazyEviction remove tokens from attention caches~\citep{NEURIPS2023_6ceefa7b,NEURIPS2024_28ab4182,zhang2025lazyevictionlaggedkveviction}; token-pruning methods such as Think Clearly delete tokens by importance~\citep{choi-etal-2025-think}; thought-aware compression methods such as ThinKV, KaVa, and LightThinker compress thought representations~\citep{ramachandran2026thinkv,kuzina2026kava,zhang-etal-2025-lightthinker}; short-chain selection chooses shorter solutions from multiple samples~\citep{hassid2026dontoverthinkitpreferring}; gist tokens~\citep{NEURIPS2023_3d77c6dc} and memory-augmented architectures such as NTM and DNC store or read information in more compact forms~\citep{graves2014neuralturingmachines,Graves_2016}. Process-level supervision, step verification, and trace editing/intervention also study the controllability of intermediate reasoning. The distinction is that prior work mainly controls intermediate reasoning traces, whereas this paper trains whether the visible answer state $A$ can serve as the downstream reasoning interface after hidden thought $H$ is erased.

A superficially similar alternative is Long-answer SFT: supervise only the erased visible answer sequence $A_{1:n}$, without hidden thoughts $H$ and without full / erasure / skip counterfactual comparisons. This baseline is a diagnostic control rather than a strict causal ablation, because it changes both the supervision target and the optimization procedure. Its role is to test a weaker alternative explanation: whether supervising longer visible answer states alone naturally yields the same erasure-dependence signature. The key distinction is \emph{supervising answer form vs. training counterfactual sufficiency}, or \emph{what is written now vs. what can still be relied on later}.

\begin{table}[h]
\centering
\caption{Positioning of chain-of-thought related directions.}
\label{tab:direction}
\small
\setlength{\tabcolsep}{4pt}
\begin{tabular}{@{}p{2.7cm}p{4.9cm}p{1.6cm}p{2.9cm}@{}}
\toprule
Direction & Core objective & Time direction & Retained object\\
\midrule
CoT Compression & Compress and retain thought information & Past & Compressed thought\\
Token Pruning & Delete low-importance tokens & Past & Remaining thought tokens\\
Structured Outcome & Retain distilled outcome & Intermediate & Outcome block\\
LightThinker-style & Compress into internal compact memory & Past/internal & Gist / compressed memory\\
\textbf{Ours} & Commit persistent state: $A$ remains future-reliable after $H$ is erased & \textbf{Future} & \textbf{Committed answer state}\\
\bottomrule
\end{tabular}
\end{table}


{\small
\begin{thebibliography}{22}
\providecommand{\natexlab}[1]{#1}
\providecommand{\url}[1]{\texttt{#1}}
\expandafter\ifx\csname urlstyle\endcsname\relax
  \providecommand{\doi}[1]{doi: #1}\else
  \providecommand{\doi}{doi: \begingroup \urlstyle{rm}\Url}\fi

\bibitem[Choi et~al.(2025)Choi, Lee, Tack, Song, Dingliwal, Jayanthi, Ganesh,
  Shin, Galstyan, and Bodapati]{choi-etal-2025-think}
Daewon Choi, Jimin Lee, Jihoon Tack, Woomin Song, Saket Dingliwal,
  Sai~Muralidhar Jayanthi, Bhavana Ganesh, Jinwoo Shin, Aram Galstyan, and
  Sravan~Babu Bodapati.
\newblock Think clearly: Improving reasoning via redundant token pruning.
\newblock In Christos Christodoulopoulos, Tanmoy Chakraborty, Carolyn Rose, and
  Violet Peng, editors, \emph{Findings of the Association for Computational
  Linguistics: EMNLP 2025}, pages 21437--21451, Suzhou, China, November 2025.
  Association for Computational Linguistics.
\newblock ISBN 979-8-89176-335-7.
\newblock \doi{10.18653/v1/2025.findings-emnlp.1169}.
\newblock URL \url{https://aclanthology.org/2025.findings-emnlp.1169/}.

\bibitem[Graves et~al.(2014)Graves, Wayne, and
  Danihelka]{graves2014neuralturingmachines}
Alex Graves, Greg Wayne, and Ivo Danihelka.
\newblock Neural turing machines, 2014.
\newblock URL \url{https://arxiv.org/abs/1410.5401}.

\bibitem[Graves et~al.(2016)Graves, Wayne, Reynolds, Harley, Danihelka,
  Grabska-Barwińska, Colmenarejo, Grefenstette, Ramalho, Agapiou, Badia,
  Hermann, Zwols, Ostrovski, Cain, King, Summerfield, Blunsom, Kavukcuoglu, and
  Hassabis]{Graves_2016}
Alex Graves, Greg Wayne, Malcolm Reynolds, Tim Harley, Ivo Danihelka, Agnieszka
  Grabska-Barwińska, Sergio~Gómez Colmenarejo, Edward Grefenstette, Tiago
  Ramalho, John Agapiou, Adrià~Puigdomènech Badia, Karl~Moritz Hermann, Yori
  Zwols, Georg Ostrovski, Adam Cain, Helen King, Christopher Summerfield, Phil
  Blunsom, Koray Kavukcuoglu, and Demis Hassabis.
\newblock Hybrid computing using a neural network with dynamic external memory.
\newblock \emph{Nature}, 538\penalty0 (7626):\penalty0 471–476, October 2016.
\newblock ISSN 1476-4687.
\newblock \doi{10.1038/nature20101}.
\newblock URL \url{http://dx.doi.org/10.1038/nature20101}.

\bibitem[Hassid et~al.(2026)Hassid, Synnaeve, Adi, and
  Schwartz]{hassid2026dontoverthinkitpreferring}
Michael Hassid, Gabriel Synnaeve, Yossi Adi, and Roy Schwartz.
\newblock Don't overthink it. preferring shorter thinking chains for improved
  llm reasoning, 2026.
\newblock URL \url{https://arxiv.org/abs/2505.17813}.

\bibitem[He et~al.(2025)He, Liang, Xu, Liu, Chen, Wang, Song, Yu, Liang, Wang,
  Zhang, Wang, Tu, Mi, and
  Yu]{he2025deepmath103klargescalechallengingdecontaminated}
Zhiwei He, Tian Liang, Jiahao Xu, Qiuzhi Liu, Xingyu Chen, Yue Wang, Linfeng
  Song, Dian Yu, Zhenwen Liang, Wenxuan Wang, Zhuosheng Zhang, Rui Wang,
  Zhaopeng Tu, Haitao Mi, and Dong Yu.
\newblock Deepmath-103k: A large-scale, challenging, decontaminated, and
  verifiable mathematical dataset for advancing reasoning, 2025.
\newblock URL \url{https://arxiv.org/abs/2504.11456}.

\bibitem[Kuzina et~al.(2026)Kuzina, Pi{\'o}ro, and Bejnordi]{kuzina2026kava}
Anna Kuzina, Maciej Pi{\'o}ro, and Babak~Ehteshami Bejnordi.
\newblock Kava: Latent reasoning via compressed {KV}-cache distillation.
\newblock In \emph{The Fourteenth International Conference on Learning
  Representations}, 2026.
\newblock URL \url{https://openreview.net/forum?id=ePrhcLbtGv}.

\bibitem[Li et~al.(2024)Li, Huang, Yang, Venkitesh, Locatelli, Ye, Cai, Lewis,
  and Chen]{NEURIPS2024_28ab4182}
Yuhong Li, Yingbing Huang, Bowen Yang, Bharat Venkitesh, Acyr Locatelli,
  Hanchen Ye, Tianle Cai, Patrick Lewis, and Deming Chen.
\newblock Snapkv: Llm knows what you are looking for before generation.
\newblock In A.~Globerson, L.~Mackey, D.~Belgrave, A.~Fan, U.~Paquet,
  J.~Tomczak, and C.~Zhang, editors, \emph{Advances in Neural Information
  Processing Systems}, volume~37, pages 22947--22970. Curran Associates, Inc.,
  2024.
\newblock \doi{10.52202/079017-0722}.
\newblock URL
  \url{https://proceedings.neurips.cc/paper_files/paper/2024/file/28ab418242603e0f7323e54185d19bde-Paper-Conference.pdf}.

\bibitem[Li et~al.(2026)Li, Wu, Wang, and
  Zhao]{li2026limitedreasoningspacecage}
Zhenyu Li, Guanlin Wu, Cheems Wang, and Yongqiang Zhao.
\newblock Limited reasoning space: The cage of long-horizon reasoning in llms,
  2026.
\newblock URL \url{https://arxiv.org/abs/2602.19281}.

\bibitem[Lin et~al.(2025)Lin, Le~Bras, Richardson, Sabharwal, Poovendran,
  Clark, and Choi]{pmlr-v267-lin25i}
Bill~Yuchen Lin, Ronan Le~Bras, Kyle Richardson, Ashish Sabharwal, Radha
  Poovendran, Peter Clark, and Yejin Choi.
\newblock {Z}ebra{L}ogic: On the scaling limits of {LLM}s for logical
  reasoning.
\newblock In Aarti Singh, Maryam Fazel, Daniel Hsu, Simon Lacoste-Julien, Felix
  Berkenkamp, Tegan Maharaj, Kiri Wagstaff, and Jerry Zhu, editors,
  \emph{Proceedings of the 42nd International Conference on Machine Learning},
  volume 267 of \emph{Proceedings of Machine Learning Research}, pages
  37889--37905. PMLR, 13--19 Jul 2025.
\newblock URL \url{https://proceedings.mlr.press/v267/lin25i.html}.

\bibitem[Luo et~al.(2025)Luo, Li, Huang, and Lu]{luo-etal-2025-valley}
Renjie Luo, Jiaxi Li, Chen Huang, and Wei Lu.
\newblock Through the valley: Path to effective long {C}o{T} training for small
  language models.
\newblock In Christos Christodoulopoulos, Tanmoy Chakraborty, Carolyn Rose, and
  Violet Peng, editors, \emph{Proceedings of the 2025 Conference on Empirical
  Methods in Natural Language Processing}, pages 4972--4992, Suzhou, China,
  November 2025. Association for Computational Linguistics.
\newblock ISBN 979-8-89176-332-6.
\newblock \doi{10.18653/v1/2025.emnlp-main.251}.
\newblock URL \url{https://aclanthology.org/2025.emnlp-main.251/}.

\bibitem[Mu et~al.(2023)Mu, Li, and Goodman]{NEURIPS2023_3d77c6dc}
Jesse Mu, Xiang Li, and Noah Goodman.
\newblock Learning to compress prompts with gist tokens.
\newblock In A.~Oh, T.~Naumann, A.~Globerson, K.~Saenko, M.~Hardt, and
  S.~Levine, editors, \emph{Advances in Neural Information Processing Systems},
  volume~36, pages 19327--19352. Curran Associates, Inc., 2023.
\newblock URL
  \url{https://proceedings.neurips.cc/paper_files/paper/2023/file/3d77c6dcc7f143aa2154e7f4d5e22d68-Paper-Conference.pdf}.

\bibitem[Patil et~al.(2025)Patil, Mao, Yan, Ji, Suresh, Stoica, and
  Gonzalez]{pmlr-v267-patil25a}
Shishir~G Patil, Huanzhi Mao, Fanjia Yan, Charlie Cheng-Jie Ji, Vishnu Suresh,
  Ion Stoica, and Joseph~E. Gonzalez.
\newblock The berkeley function calling leaderboard ({BFCL}): From tool use to
  agentic evaluation of large language models.
\newblock In Aarti Singh, Maryam Fazel, Daniel Hsu, Simon Lacoste-Julien, Felix
  Berkenkamp, Tegan Maharaj, Kiri Wagstaff, and Jerry Zhu, editors,
  \emph{Proceedings of the 42nd International Conference on Machine Learning},
  volume 267 of \emph{Proceedings of Machine Learning Research}, pages
  48371--48392. PMLR, 13--19 Jul 2025.
\newblock URL \url{https://proceedings.mlr.press/v267/patil25a.html}.

\bibitem[Ramachandran et~al.(2026)Ramachandran, Neseem, Sakr, Venkatesan,
  Khailany, and Krishna]{ramachandran2026thinkv}
Akshat Ramachandran, Marina Neseem, Charbel Sakr, Rangharajan Venkatesan,
  Brucek Khailany, and Tushar Krishna.
\newblock Thin{KV}: Thought-adaptive {KV} cache compression for efficient
  reasoning models.
\newblock In \emph{The Fourteenth International Conference on Learning
  Representations}, 2026.
\newblock URL \url{https://openreview.net/forum?id=M3CeHnZKNC}.

\bibitem[Rein et~al.(2024)Rein, Hou, Stickland, Petty, Pang, Dirani, Michael,
  and Bowman]{rein2024gpqa}
David Rein, Betty~Li Hou, Asa~Cooper Stickland, Jackson Petty, Richard~Yuanzhe
  Pang, Julien Dirani, Julian Michael, and Samuel~R. Bowman.
\newblock {GPQA}: A graduate-level google-proof q\&a benchmark.
\newblock In \emph{First Conference on Language Modeling}, 2024.
\newblock URL \url{https://openreview.net/forum?id=Ti67584b98}.

\bibitem[Wu et~al.(2026)Wu, Wang, Ye, Du, Jegelka, and Wang]{wu2026when}
Yuyang Wu, Yifei Wang, Ziyu Ye, Tianqi Du, Stefanie Jegelka, and Yisen Wang.
\newblock When more is less: Understanding chain-of-thought length in {LLM}s.
\newblock In \emph{The Fourteenth International Conference on Learning
  Representations}, 2026.
\newblock URL \url{https://openreview.net/forum?id=6QDFsYxtI1}.

\bibitem[Xia et~al.(2025)Xia, Leong, Wang, Li, and Li]{xia-etal-2025-tokenskip}
Heming Xia, Chak~Tou Leong, Wenjie Wang, Yongqi Li, and Wenjie Li.
\newblock {T}oken{S}kip: Controllable chain-of-thought compression in {LLM}s.
\newblock In Christos Christodoulopoulos, Tanmoy Chakraborty, Carolyn Rose, and
  Violet Peng, editors, \emph{Proceedings of the 2025 Conference on Empirical
  Methods in Natural Language Processing}, pages 3351--3363, Suzhou, China,
  November 2025. Association for Computational Linguistics.
\newblock ISBN 979-8-89176-332-6.
\newblock \doi{10.18653/v1/2025.emnlp-main.165}.
\newblock URL \url{https://aclanthology.org/2025.emnlp-main.165/}.

\bibitem[Yang et~al.(2025)Yang, Li, Yang, Zhang, Hui, Zheng, Yu, Gao, Huang,
  Lv, Zheng, Liu, Zhou, Huang, Hu, Ge, Wei, Lin, Tang, Yang, Tu, Zhang, Yang,
  Yang, Zhou, Zhou, Lin, Dang, Bao, Yang, Yu, Deng, Li, Xue, Li, Zhang, Wang,
  Zhu, Men, Gao, Liu, Luo, Li, Tang, Yin, Ren, Wang, Zhang, Ren, Fan, Su,
  Zhang, Zhang, Wan, Liu, Wang, Cui, Zhang, Zhou, and
  Qiu]{yang2025qwen3technicalreport}
An~Yang, Anfeng Li, Baosong Yang, Beichen Zhang, Binyuan Hui, Bo~Zheng, Bowen
  Yu, Chang Gao, Chengen Huang, Chenxu Lv, Chujie Zheng, Dayiheng Liu, Fan
  Zhou, Fei Huang, Feng Hu, Hao Ge, Haoran Wei, Huan Lin, Jialong Tang, Jian
  Yang, Jianhong Tu, Jianwei Zhang, Jianxin Yang, Jiaxi Yang, Jing Zhou,
  Jingren Zhou, Junyang Lin, Kai Dang, Keqin Bao, Kexin Yang, Le~Yu, Lianghao
  Deng, Mei Li, Mingfeng Xue, Mingze Li, Pei Zhang, Peng Wang, Qin Zhu, Rui
  Men, Ruize Gao, Shixuan Liu, Shuang Luo, Tianhao Li, Tianyi Tang, Wenbiao
  Yin, Xingzhang Ren, Xinyu Wang, Xinyu Zhang, Xuancheng Ren, Yang Fan, Yang
  Su, Yichang Zhang, Yinger Zhang, Yu~Wan, Yuqiong Liu, Zekun Wang, Zeyu Cui,
  Zhenru Zhang, Zhipeng Zhou, and Zihan Qiu.
\newblock Qwen3 technical report, 2025.
\newblock URL \url{https://arxiv.org/abs/2505.09388}.

\bibitem[Zhang et~al.(2025{\natexlab{a}})Zhang, Zhang, Ma, Zhang, and
  Guo]{zhang2025lazyevictionlaggedkveviction}
Haoyue Zhang, Hualei Zhang, Xiaosong Ma, Jie Zhang, and Song Guo.
\newblock Lazyeviction: Lagged kv eviction with attention pattern observation
  for efficient long reasoning, 2025{\natexlab{a}}.
\newblock URL \url{https://arxiv.org/abs/2506.15969}.

\bibitem[Zhang et~al.(2025{\natexlab{b}})Zhang, Zhu, Sun, Luo, Qiao, Du, Zheng,
  Chen, and Zhang]{zhang-etal-2025-lightthinker}
Jintian Zhang, Yuqi Zhu, Mengshu Sun, Yujie Luo, Shuofei Qiao, Lun Du,
  Da~Zheng, Huajun Chen, and Ningyu Zhang.
\newblock {L}ight{T}hinker: Thinking step-by-step compression.
\newblock In Christos Christodoulopoulos, Tanmoy Chakraborty, Carolyn Rose, and
  Violet Peng, editors, \emph{Proceedings of the 2025 Conference on Empirical
  Methods in Natural Language Processing}, pages 13307--13328, Suzhou, China,
  November 2025{\natexlab{b}}. Association for Computational Linguistics.
\newblock ISBN 979-8-89176-332-6.
\newblock \doi{10.18653/v1/2025.emnlp-main.673}.
\newblock URL \url{https://aclanthology.org/2025.emnlp-main.673/}.

\bibitem[Zhang et~al.(2023)Zhang, Sheng, Zhou, Chen, Zheng, Cai, Song, Tian,
  R\\'{e}, Barrett, Wang, and Chen]{NEURIPS2023_6ceefa7b}
Zhenyu Zhang, Ying Sheng, Tianyi Zhou, Tianlong Chen, Lianmin Zheng, Ruisi Cai,
  Zhao Song, Yuandong Tian, Christopher R\\'{e}, Clark Barrett,
  Zhangyang~"Atlas" Wang, and Beidi Chen.
\newblock H2o: Heavy-hitter oracle for efficient generative inference of large
  language models.
\newblock In A.~Oh, T.~Naumann, A.~Globerson, K.~Saenko, M.~Hardt, and
  S.~Levine, editors, \emph{Advances in Neural Information Processing Systems},
  volume~36, pages 34661--34710. Curran Associates, Inc., 2023.
\newblock URL
  \url{https://proceedings.neurips.cc/paper_files/paper/2023/file/6ceefa7b15572587b78ecfcebb2827f8-Paper-Conference.pdf}.

\bibitem[Zheng et~al.(2025)Zheng, Chen, Li, Li, Zong, Shi, Xu, Song, Wong, and
  See]{zheng2025the}
Tianshi Zheng, Yixiang Chen, Chengxi Li, Chunyang Li, Qing Zong, Haochen Shi,
  Baixuan Xu, Yangqiu Song, Ginny Wong, and Simon See.
\newblock The curse of cot: On the limitations of chain-of-thought in
  in-context learning.
\newblock \emph{Transactions on Machine Learning Research}, 2025.
\newblock ISSN 2835-8856.
\newblock URL \url{https://openreview.net/forum?id=7SIrvcYNYj}.

\bibitem[Zhu et~al.(2025)Zhu, Huang, Peng, Lu, Yu, Cheng, Qiu, Huang, and
  Lin]{zhu2025autologiautomatedgenerationlogic}
Qin Zhu, Fei Huang, Runyu Peng, Keming Lu, Bowen Yu, Qinyuan Cheng, Xipeng Qiu,
  Xuanjing Huang, and Junyang Lin.
\newblock Autologi: Automated generation of logic puzzles for evaluating
  reasoning abilities of large language models, 2025.
\newblock URL \url{https://arxiv.org/abs/2502.16906}.

\end{thebibliography}

}
\end{document}